\documentclass[sigconf,nonacm]{acmart}

\usepackage{booktabs}
\usepackage{algorithm}
\usepackage{algpseudocode}
\usepackage{balance}
\usepackage{multirow}
\usepackage{amsmath}
\usepackage{amsfonts}
\usepackage{xspace}
\usepackage{url}
\usepackage{amsthm}
\usepackage{makecell}
\usepackage{comment}
\usepackage{afterpage}
\usepackage{float}
\usepackage{colortbl}

\newtheorem{definition}{Definition}
\newtheorem{theorem}{Theorem}
\newtheorem{corollary}{Corollary}
\newtheorem{proposition}{Proposition}
\newcommand{\modelname}{InRTL\xspace}

\newcommand{\lin}{\mathrm{lin}}
\algrenewcommand{\algorithmicrequire}{\textbf{Input:}}
\algrenewcommand{\algorithmicensure}{\textbf{Output:}}

\begin{document}

\title{InRTL: Effective Intra--Inter Interaction Learning for Relational Tables}

\author{Weichen Li}
\email{weichenli@sjtu.edu.cn}
\affiliation{%
  \institution{Shanghai Jiao Tong University}
  \city{Shanghai}
  \country{China}
}

\author{Ken Zhong}
\email{zhongken@sjtu.edu.cn}
\affiliation{%
  \institution{Shanghai Jiao Tong University}
  \city{Shanghai}
  \country{China}
}

\author{Zheng Wang}
\authornote{Zheng Wang is the corresponding author.}
\email{wzheng@sjtu.edu.cn}
\affiliation{%
  \institution{Shanghai Jiao Tong University}
  \city{Shanghai}
  \country{China}
}

\author{Li Pan}
\email{panli@sjtu.edu.cn}
\affiliation{%
  \institution{Shanghai Jiao Tong University}
  \city{Shanghai}
  \country{China}
}

\author{Jianhua Li}
\email{lijh888@sjtu.edu.cn}
\affiliation{%
  \institution{Shanghai Jiao Tong University}
  \city{Shanghai}
  \country{China}
}

\renewcommand{\shortauthors}{Weichen Li, Ken Zhong, Zheng Wang, Li Pan, \& Jianhua Li}

\begin{abstract}
Relational table learning has recently emerged as an important research direction for modeling multiple tables connected through primary key--foreign key (PK--FK) relationships. 
Despite recent advances, a principled modeling framework tailored to this task remains underexplored.
In this paper, we propose \textbf{I}ntra--I\textbf{n}ter \textbf{R}elational \textbf{T}able \textbf{L}earning (\textbf{\modelname}), a unified framework that explicitly models dependencies both within and across relational tables. 
Specifically, \modelname formalizes two complementary interaction patterns: \emph{intra-table interactions}, describing associations among rows within the same table, and \emph{inter-table interactions}, describing dependencies between rows across PK--FK-linked tables. 
To model these dependencies, we develop a column-aware table encoder to generate initial row representations, followed by Transformer-based self-attention and cross-attention modules for intra-table and inter-table learning, respectively. 
To further improve scalability, \modelname incorporates linearized attention and heterogeneous graph neural networks to simplify the self-attention and cross-attention operations. 
Extensive experiments on ten datasets covering 24 real-world tasks demonstrate the effectiveness of our approach. 
Code is available at \url{https://github.com/W1nterFloW/InRTL}.
\end{abstract}

\begin{CCSXML}
<ccs2012>
   <concept>
       <concept_id>10002951.10003227.10003351</concept_id>
       <concept_desc>Information systems~Data mining</concept_desc>
       <concept_significance>500</concept_significance>
       </concept>
   <concept>
       <concept_id>10010147.10010257.10010293</concept_id>
       <concept_desc>Computing methodologies~Machine learning approaches</concept_desc>
       <concept_significance>500</concept_significance>
       </concept>
   <concept>
       <concept_id>10003752.10003809</concept_id>
       <concept_desc>Theory of computation~Design and analysis of algorithms</concept_desc>
       <concept_significance>500</concept_significance>
       </concept>
 </ccs2012>
\end{CCSXML}

\ccsdesc[500]{Information systems~Data mining}
\ccsdesc[500]{Computing methodologies~Machine learning approaches}
\ccsdesc[500]{Theory of computation~Design and analysis of algorithms}

\keywords{Relational Table Learning; Data Mining; Deep Learning}


\maketitle

\section{Introduction}\label{sec:introduction}
Tabular data is one of the most prevalent data modalities in real-world applications~\cite{cremaschi2024survey}.
It has a simple and well-defined structure, consisting of samples (rows) described by a consistent set of features (columns).
Owing to its structured format and strong interpretability, tabular data plays a central role across a wide range of domains, including medicine, finance, manufacturing, and many other application areas.

Traditional tabular learning methods primarily focus on single-table settings, aiming to capture complex interactions among table elements (like rows and columns) within the same table.
Early methods mainly relied on shallow models such as decision trees, random forests, and gradient boosting machines~\cite{friedman2001greedy}, while more recent approaches employ deep neural networks to capture complex nonlinear feature interactions~\cite{gorishniy2021revisiting}.
However, extending these methods to multi-table scenarios typically requires extensive manual feature engineering to denormalize multiple tables into a single flat table~\cite{kaggle2022survey}. 
This process is labor-intensive and often sacrifices the underlying relational structure, leading to performance bottlenecks.

Consequently, recent research has increasingly shifted toward relational table learning~\cite{zahradnik2023deep,robinson2024relbench,li2024rllm,chen2026tlsql}. 
This setting reflects modern relational databases, where data is distributed across multiple interconnected tables linked via primary-foreign key (PK-FK) relationships, forming rich semantic structures. 
Existing methods~\cite{chen2025relgnn,lachi2025boostingrelationaldeeplearning} typically combine tabular neural networks (TNNs)~\cite{borisov2022deep} with graph neural networks (GNNs)~\cite{kipf2017semisupervised}. 
TNNs encode each table independently to produce row-level embeddings capturing intra-table dependencies, which are then refined via GNN-based message passing to model inter-table relationships.
Although these approaches have shown some effectiveness, they largely rely on generic GNN message-passing mechanisms, and a principled relational modeling framework tailored to this task remains underexplored.

In this paper, we propose \textbf{I}ntra--I\textbf{n}ter \textbf{R}elational \textbf{T}able \textbf{L}earning (\textbf{\modelname}), a unified framework that explicitly models both intra-table and inter-table dependencies. 
Specifically, we define two complementary interaction patterns: (1) \emph{intra-table interactions}, which capture dependencies among rows within the same table, and (2) \emph{inter-table interactions}, which capture dependencies between rows across PK--FK-linked tables. 
Based on this formulation, we develop a column-aware table encoder to generate initial row representations, followed by Transformer-based self-attention and cross-attention modules~\cite{vaswani2017attention} for intra-table and inter-table modeling, respectively.

To improve scalability, \modelname further incorporates linearized attention and heterogeneous graph neural networks (HGNNs) to simplify the self-attention and cross-attention modules, respectively. 
Specifically, linearized attention reduces the computational cost of self-attention via a first-order Taylor approximation, while HGNNs reformulate cross-attention in terms of message passing to eliminate computations between rows without PK--FK relationships.
As a result, \modelname scales linearly with both the number of table rows and the number of inter-table PK--FK relations, preserving expressive power while enabling efficient learning on large relational datasets.

\modelname is simple yet theoretically grounded.
In contrast to traditional Transformer-based models, our method eliminates the need for positional encodings, or auxiliary training objectives, resulting in a compact and efficient learning pipeline.
Furthermore, our work provides new insights into the representational power of Transformers for large-scale relational tables: by connecting linear attention with classical HGNN message passing, we offer both theoretical and practical support for leveraging GNN-inspired techniques in relational tabular learning.

In summary, we make the following contributions:
\begin{itemize}
  \item We propose a novel relational tabular data learning method, \modelname, which explicitly defines and effectively captures both dependencies within and across tables.
  \item We theoretically analyze our method, including properties and error bounds of the linear attention mechanism, as well as the connection between Transformer and graph neural networks.
  \item We conduct extensive experiments on ten datasets from two well-established benchmarks, covering 24 real-world tasks, to validate the effectiveness of our proposed method. 
\end{itemize}

\section{Preliminaries}\label{sec:preliminaries}
In this section, we introduce some key definitions and essential preliminary concepts used throughout this paper.

\begin{figure*}[!t]
\centering
\includegraphics[width=0.8\textwidth]{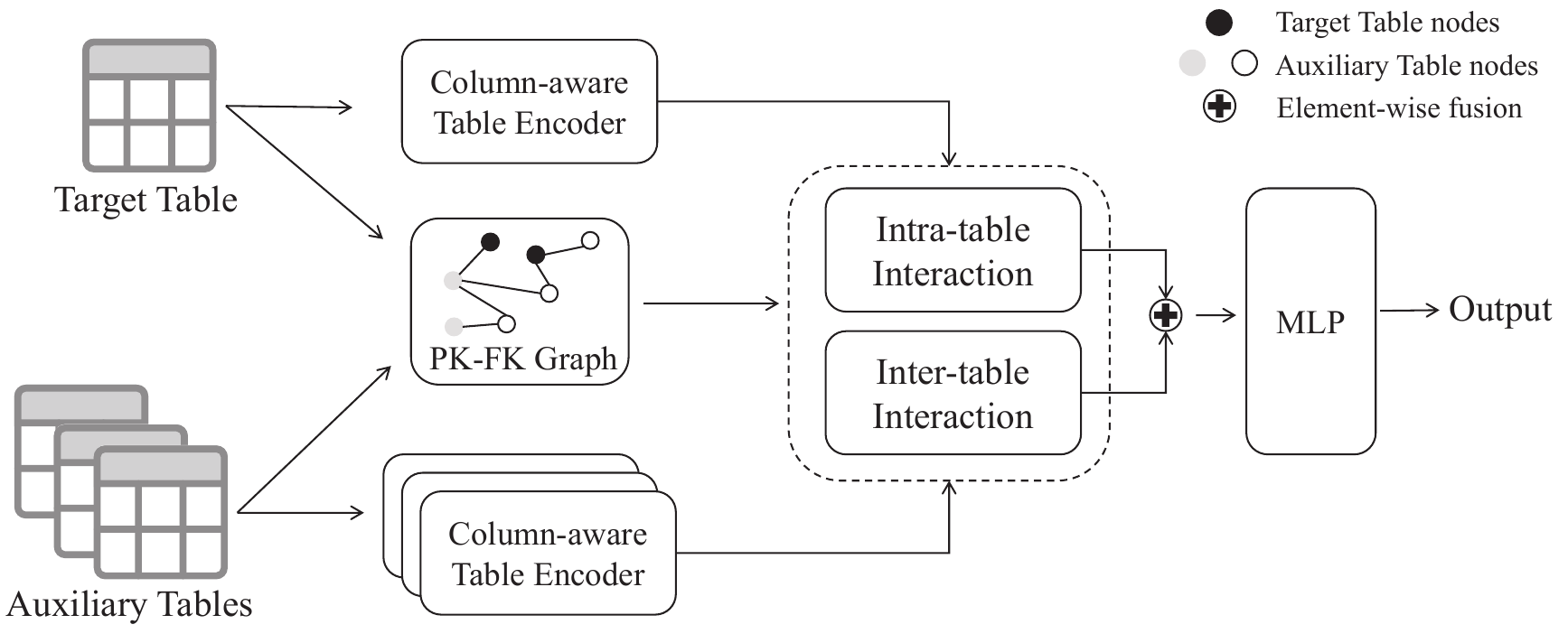}
\caption{Overview of {\modelname}, where each row in the target and auxiliary tables is treated as a node in the PK-FK graph.}
\Description{An overview figure illustrating the architecture of the proposed model.}
\label{fig:overview}
\end{figure*}

\subsection{Problem Definition}\label{preliminaries:problem_definition}
In this subsection, we formally define relational table data and the intra- and inter-table interactions within it.

\begin{definition}[\textit{Relational Table Data}]
Relational table data is defined as $(\mathcal{T}, \mathcal{K})$, where $\mathcal{T} = \{T_i\}_{i=1}^{|\mathcal{T}|}$ denotes a set of tables and
$\mathcal{K} \subseteq \mathcal{T} \times \mathcal{T}$ represents PK-FK relations among tables.
Typically, each table in $\mathcal{T}$ contains a primary key to uniquely identify each row.
If a table $T_i$ contains a foreign key that references the primary key of another table $T_j$, then there exists a relation $<T_i, T_j> \in \mathcal{K}$.
\end{definition}    

This definition characterizes multi-table data with primary and foreign key (PK-FK) structures, which are prevalent in practice. For instance, in the MovieLens dataset~\cite{harper2015movielens}, the user, movie, and rating tables are linked via user and movie IDs through PK-FK relations.

\begin{definition}[\textit{Intra-table Interaction}]
Intra-table Interaction refers to explicit or implicit associations between different rows within the same table, reflecting contextual and structural relations among rows.
\end{definition}
Intra-table interaction typically implies that there exists some form of contextual correlation between rows within the table. For example, in the user table of MovieLens, users of the same age group tend to exhibit similar preferences in movie ratings.

\begin{definition}[\textit{Inter-table Interaction}]
Given two tables $T_i$ and $T_j$ connected via a PK-FK relation, inter-table interaction refers to the row-level associations arising from this linkage between two tables. This relation reveals how $T_i$ and $T_j$ complement each other by providing contextual information.
\end{definition}
Inter-table interactions capture associations between rows of different tables connected by PK--FK relations.
For example, in the movies table of the MovieLens dataset, the movie ID can link to rows in the ratings table to obtain corresponding rating information. Modeling such inter-table interactions helps extract valuable contextual information from the related tables.

\subsection{Heterogeneous Graph Neural Networks}\label{preliminaries:hgnn}
Heterogeneous Graph Neural Networks (HGNNs)~\cite{zheng2022graph} are graph neural network frameworks specifically developed to learn representations over heterogeneous graphs.
Generally, an HGNN layer aims to compute the node representation $\vec{h_v}$ for each node $v$, formalized as:
\begin{equation}
    \vec{h_v} = \operatorname{UPD}_{\psi(v)} \bigl( \operatorname{AGG}_{r} \left( \left\{ \mathcal{N}_v^r: r \in \mathcal{R} \right\} \right) \bigr)
    \label{eq:hgnn}
\end{equation}
where $\mathcal{R}$ denotes the set of edge types, and $\mathcal{N}_v^r$ represents the neighbors of node $v$ connected via edges of type $r$. 
$\operatorname{AGG}_r(\cdot)$ is an edge-type-specific aggregation function, and $\operatorname{UPD}_{\psi(v)}(\cdot)$ is a node-type-specific update function for nodes of type $\psi(v)$.

\subsection{Transformer Architecture}\label{preliminaries:transformer_arch}
The Transformer~\cite{vaswani2017attention} consists of multiple stacked layers. Each layer has an attention module and a feed-forward network (FFN). Let $X \in \mathbb{R}^{n \times d}$ denote the input to the attention module, where $n$ is the sequence length and $d$ is the feature dimension. The input $X$ is linearly projected to query $Q$, key $K$, and value $V$ matrices using projection matrices \(W_Q, W_K, W_V\in \mathbb{R}^{d \times d}\). The attention is then computed as:
\begin{equation}
  \begin{gathered}
    Q = X W_Q,\ K = X W_K,\ V = X W_V \\
    A = \frac{Q K^\top}{\sqrt{d}},\quad
    \operatorname{Attn}(X)=\operatorname{Softmax}(A)V \\
  \end{gathered}
  \label{eq:orig-attn}
\end{equation}

For brevity, we only describe the self-attention with single-head here, extension to multi-head and cross-attention is straightforward.

\section{Methodology}\label{sec:methodology}
As shown in Figure~\ref{fig:overview}, the proposed {\modelname} method first employs a column-aware table encoder to extract feature representations from each table independently. Then, it adopts a Transformer-like architecture to simultaneously model both intra- and inter-table interactions to get representations. Finally, these two types of representations are integrated to generate the final output.

\subsection{Column-aware Table Encoder (ColATE)}\label{methodology:table_encoder}
This module is designed to learn high-dimensional and dense feature representations for each row in a single table. It consists of three components: pre-encoding module, feature-weighting module, and fusion module.

First, in the pre-encoding module, we apply a learnable parameterized mapping to embed each cell in the table into a high-dimensional space. After pre-encoding, a table with $n$ rows and $m$ columns is mapped to the same $d$-dimensional vector space, denoted as a collection of column matrices \(\{Z_1, Z_2, \dots, Z_m\} \) where $Z_i \in \mathbb{R}^{n \times d}$ represents the feature matrix of the $i$-th column.\footnote{
For notational simplicity, all representation/feature dimensions are denoted as \( d \) throughout the methodology section.} Pre-encoding is a crucial step in tabular representation learning. We refer to~\cite{gorishniy2022embeddings} for more details.

Next, in the feature-weighting module, we compute a separate importance weight for each column, since different tasks may rely unevenly on different columns. Specifically, for each column matrix $Z_i$, we perform average pooling over all its elements to produce a scalar importance score $s_i$. Then, a Softmax operation is applied to normalize the scores across all columns, resulting in the importance weights $\alpha_i$:
\begin{equation}
    \alpha_i = \frac{\exp(s_i)}{\sum_{j=1}^{m} \exp(s_j)}
\end{equation}

Finally, in the fusion module, we first apply these computed importance weights to re-scale each column matrix, thereby dynamically focusing on more informative features. After re-weighting, a deep residual network $\operatorname{ResNet}(\cdot)$~\cite{he2016identity} is employed to capture higher-order interactions among columns. The final aggregated representation $E \in \mathbb{R}^{n \times d}$ is computed as:
\begin{equation}
\label{eq:column_embedding}
E = \operatorname{ResNet}\bigl(\operatorname{Concat}(\alpha_1 Z_1,\;\dots,\;\alpha_m Z_m)\bigr)
\end{equation}
where $\operatorname{Concat}(\cdot)$ denotes concatenation along the column dimension.

\subsection{Modeling Intra- and Inter-table Interactions with Transformer
}\label{methodology:transformer-based method}
To simplify matters, we assume that there are only two tables with $n$ rows.\footnote{This assumption is made solely to simplify the presentation. As demonstrated in our experiments, our method is also applicable to scenarios involving multiple auxiliary tables with varying numbers of rows.}
Without loss of generality, we designate one table as the target table and the other as the auxiliary table.
The encoded representations of both tables by ColATE in Section~\ref{methodology:table_encoder} are denoted as \( E_\text{tgt}, E_\text{aux} \in \mathbb{R}^{n \times d} \).
We next describe how to model intra- and inter-table features for the target table using the classical Transformer architecture.

\subsubsection{Intra-table Interaction}
For the target table, let $H_{\text{intra}}^{(l)} \in \mathbb{R}^{n \times d}$ denote the intra-table representation at layer $l$ and initialize $H_{\text{intra}}^{(0)}$ with $E_{\text{tgt}}$.
We then apply self-attention to capture contextual dependencies within the target table.
Specifically, following the standard Transformer architecture with self-attention, the representation at layer $l{+}1$ is computed as:
\begin{equation}
    \hat{H}_{\text{intra}}^{(l)} = \operatorname{Attn}\bigl(\operatorname{LN}({H}_{\text{intra}}^{(l)})\bigr) \\
    \label{eq:inner-attn}
\end{equation}
\begin{equation}
    H_{\text{intra}}^{(l+1)} = \operatorname{FFN}\bigl(\operatorname{LN}(\hat{H}_{\text{intra}}^{(l)})\bigr) + \hat{H}_{\text{intra}}^{(l)}
    \label{eq:inner-forward}
\end{equation}
where $\operatorname{LN}(\cdot)$ denotes layer normalization.

Intuitively, self-attention allows each row to attend to all other rows within the same table, enabling the model to capture global contextual information within the table effectively.

\subsubsection{Inter-table Interaction}
Similarly, let $H_{\text{inter}}^{(l)} \in \mathbb{R}^{n \times d}$ denote the inter-table representation of the target table at layer $l$, and initialize $H_{\text{inter}}^{(0)}$ with $E_{\text{tgt}}$. We employ cross-attention (denoted by $\operatorname{CrossAttn}(\cdot)$) to model the interactions between the target and auxiliary tables. Specifically, $H_{\text{inter}}^{(l)}$ serves as the query source and $E_{\text{aux}}$ as the key and value sources. To mitigate noise from unrelated rows, we apply a binary $mask$ that identifies PK--FK-linked row pairs between target table and auxiliary table, 
masking unrelated entries with $-\infty$. The inter-table representation at the $l{+}1$ layer is computed via a Transformer block with cross-attention as:
\begin{equation}
\begin{aligned}
\hat{H}_{\text{inter}}^{(l)} =
&\operatorname{CrossAttn}\bigl(\operatorname{LN}({H}_{\text{inter}}^{(l)}),\\
&\operatorname{LN}(E_{\text{aux}}), mask
\bigr)
\end{aligned}
\label{eq:inter-attn}
\end{equation}    
\begin{equation}
    H_{\text{inter}}^{(l+1)} = \operatorname{FFN}\bigl(\operatorname{LN}(\hat{H}_{\text{inter}}^{(l)})\bigr) + \hat{H}_{\text{inter}}^{(l)}
\label{eq:inter-forward}
\end{equation}

Intuitively, cross-attention enables each row in the target table to attend selectively to PK--FK-related rows in the auxiliary table, effectively integrating relevant information while ignoring irrelevant rows.

\subsubsection{Combination of Two Interactions}
Finally, we fuse the intra-table and inter-table representations and apply a multilayer perceptron (MLP) to produce the final prediction.
Specifically, assuming the model has $L$ layers, the final prediction for the target table is given by:
\begin{equation}
    \operatorname{MLP}\bigl(\beta H_{\text{intra}}^{(L)} + (1 - \beta)H_{\text{inter}}^{(L)}\bigr)
    \label{eq:fusion}
\end{equation}
where $\beta$ is a learnable weighting coefficient that balances the contributions of intra- and inter-table representations.
The entire framework can then be optimized end-to-end using task-specific loss functions.

\noindent \subsubsection*{\textbf{Scalability Limitation}}
Adopting classical Transformer blocks for both intra-table (Eq.~\ref{eq:inner-attn}) and inter-table modeling (Eq.~\ref{eq:inter-attn}) requires pairwise attention among the $n$ rows in each case. 
Both involve computing an $n \times n$ attention matrix, resulting in $\mathcal{O}(n^2)$ time complexity~\cite{vaswani2017attention}, which limits scalability to large relational table datasets.

\subsection{Simplifying Transformer for Table Interactions}\label{methodology:simplification}
In this subsection, we propose simplification strategies for both intra-table and inter-table interactions to improve scalability to large tabular datasets.

\subsubsection{Intra-table Interaction Simplification}
To reduce the computational complexity of intra-table self-attention, we propose a linear approximation to the standard Transformer self-attention.

\begin{theorem}
Let $H_{\text{intra}}^{(l)} \in \mathbb{R}^{n \times d}$ denote the intra-table representation at the $l$-th layer, where $n$ is the number of rows and $d$ is the feature dimension.
Following the standard Transformer architecture (Eq.~\ref{eq:orig-attn}), define
\(
Q = H_{\text{intra}}^{(l)}W_Q,\ K = H_{\text{intra}}^{(l)}W_K,\ V = H_{\text{intra}}^{(l)}W_V.
\)
Assume that each row vector $\vec{q_i}$, $\vec{k_i}$, and $\vec{v_i}$ in $Q$, $K$, and $V$ is normalized such that
\(
\|\vec{q_i}\|_2 = \|\vec{k_i}\|_2 = \|\vec{v_i}\|_2 = 1.
\)
By applying a first-order Taylor expansion of the exponential function at zero within the Softmax operation of Eq.~\ref{eq:orig-attn}, the intra-table attention defined in Eq.~\ref{eq:inner-attn} becomes:
\begin{equation} \label{eq:linear-attn}
    \hat{H}_{\text{intra}}^{(l)} = 
    D^{-1}\bigl[J_{n}\,V + Q\,(K^\top V)\bigr]
\end{equation}
where $J_n = \vec{\mathbf{1}}_n \vec{\mathbf{1}}_n^\top \in \mathbb{R}^{n \times n}$ is an all-ones matrix, $\vec{\mathbf{1}}_n \in \mathbb{R}^n$ is a vector of all ones, and $D = \mathrm{Diag}\left(n + Q(K^\top \vec{\mathbf{1}}_n)\right)$ is a diagonal matrix.
\label{theo:linear-attn}
\end{theorem}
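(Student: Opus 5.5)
The proof is a direct row-by-row computation followed by reassembly into matrix form. I will not invoke any earlier result beyond the definition of attention in Eq.~\ref{eq:orig-attn}.

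\textbf{Step 1: row-wise attention.} For a fixed row $i$, Eq.~\ref{eq:orig-attn} gives
\[
\hat{h}_i=\frac{\sum_{j=1}^n \exp(\vec{q_i}^\top\vec{k_j}/\sqrt{d})\,\vec{v_j}}{\sum_{j=1}^n \exp(\vec{q_i}^\top\vec{k_j}/\sqrt{d})}.
\]
The scaling $1/\sqrt{d}$ can be absorbed into $W_Q$ (or equivalently into $Q$) without affecting the argument. I will therefore write the logits simply as $\vec{q_i}^\top\vec{k_j}$, which matches the form of Eq.~\ref{eq:linear-attn}.

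\textbf{Step 2: linearize the exponential.} By Cauchy--Schwarz and the unit-norm assumption, $|\vec{q_i}^\top\vec{k_j}|\le 1$. Every logit therefore lies in $[-1,1]$, so the expansion point $0$ is natural. Replacing $\exp(x)$ by $1+x$ gives a weight $1+\vec{q_i}^\top\vec{k_j}\ge 0$, with equality only when $\vec{q_i}=-\vec{k_j}$.

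\textbf{Step 3: reassemble in matrix form.} The numerator becomes
\[
\sum_j \vec{v_j}+\vec{q_i}^\top\sum_j \vec{k_j}\vec{v_j}^\top.
\]
Stacking over $i$, the first term is $J_nV$ and the second is $Q(K^\top V)$. The denominator becomes
\[
n+\vec{q_i}^\top\sum_j\vec{k_j}=n+\bigl(Q(K^\top\vec{\mathbf{1}}_n)\bigr)_i.
\]
Dividing row $i$ by this scalar for every $i$ is exactly left-multiplication by $D^{-1}$. This yields Eq.~\ref{eq:linear-attn}.

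\textbf{Main obstacle: well-definedness of $D^{-1}$.} The delicate point is showing that the diagonal of $D$ is nonzero. Since each term satisfies $1+\vec{q_i}^\top\vec{k_j}\ge 0$, every diagonal entry $D_{ii}$ is nonnegative. It equals zero only in the degenerate case where every $\vec{k_j}=-\vec{q_i}$.

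I would handle this by noting that the case is measure-zero. If a strict guarantee is needed, one can either add a small $\epsilon$ to $D$ or use the scaled logits, for which $1+x/\sqrt{d}>0$ whenever $d>1$.

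Finally, I will remark on the cost of the reordered products. Computing $K^\top V$ and $K^\top\vec{\mathbf{1}}_n$ first takes $\mathcal{O}(nd^2)$ time. The product $J_nV=\vec{\mathbf{1}}_n(\vec{\mathbf{1}}_n^\top V)$ takes $\mathcal{O}(nd)$. The whole expression is therefore linear in $n$, which motivates the form of Eq.~\ref{eq:linear-attn}.
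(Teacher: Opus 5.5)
Your proposal is correct and matches the paper's proof essentially step for step. You linearize $\exp(\vec{q_i}^\top\vec{k_j})\approx 1+\vec{q_i}^\top\vec{k_j}$, which is nonnegative since $|\vec{q_i}^\top\vec{k_j}|\le 1$. You then recognize the numerator matrix as $J_n+QK^\top$ and the normalizer as its row sums $D=\mathrm{Diag}(n+Q(K^\top\vec{\mathbf{1}}_n))$, and reassociate to obtain $D^{-1}[J_nV+Q(K^\top V)]$ in $\mathcal{O}(nd^2)$ time.

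Two parts of your argument are small additions the paper leaves implicit: your explicit treatment of the $1/\sqrt{d}$ temperature, and your check that $D$ is invertible except when all $\vec{k_j}=-\vec{q_i}$. The paper silently drops the temperature in this proof and only remarks that the linearized weights are nonnegative.
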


This theorem shows that self-attention in Transformer can be approximated with linear complexity.\footnote{Due to space limitations, all proofs of the formulas presented in this paper are provided in Appendix~\ref{appendix:proofs}.}
Substituting Eq.~\ref{eq:linear-attn} into Eq.~\ref{eq:inner-forward}, we derive the simplified formulation for modeling intra-table interactions:
\begin{equation}
    \begin{aligned}
        H_{\mathrm{intra}}^{(l+1)} &= \operatorname{FFN}\bigl(D^{-1}\bigl[J_{n}\,V + Q\,(K^\top V)\bigr]\bigr) \\
         &+ D^{-1}\bigl[J_{n}\,V + Q\,(K^\top V)\bigr]
    \end{aligned}
    \label{eq:intra-final}
\end{equation}

We now derive an upper bound for the approximation error of the linearized attention weights.
\begin{corollary}
\label{cor:error-bound}
Continue to use the notation $q$, $k$, and $n$ from Theorem~\ref{theo:linear-attn}.
Let $a_{ij}$ denote the original Transformer attention weight, and $\tilde{a}_{ij}$ denote the proposed linearized approximation:
\begin{equation}
    a_{ij}
    =\frac{\exp\bigl(\vec{q}_i^\top \vec{k}_j\bigr)}
          {\sum_{l=1}^n\exp\bigl(\vec{q_i}^\top \vec{k_l}\bigr)},
    \tilde a_{ij}
    =\frac{1 + \vec{q_i}^\top \vec{k_j}}
          {n + \sum_{l=1}^n \vec{q_i}^\top \vec{k_l}}
\label{eq:attn-score-and-linear-score}
\end{equation}
where \(i,j \in \{1,\dots,n\}\) are indices. For any $i$ and $j$, we have:
\[
    \bigl|\,a_{ij} - \tilde a_{ij}\bigr| = O(\epsilon^2)
\]
where $\epsilon$ is a constant such that $\max_{i,j} |\vec{q_i}^\top \vec{k_j}| \le \epsilon < 1$.
\end{corollary}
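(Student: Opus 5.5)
Fix a row index $i$ and write $x_l = \vec{q_i}^\top \vec{k_l}$ for $l=1,\dots,n$, so that $|x_l|\le \epsilon<1$ by hypothesis (by Cauchy--Schwarz and the unit-norm assumption of Theorem~\ref{theo:linear-attn}, $\epsilon\le 1$ is automatic; we use the strict bound $\epsilon<1$ to keep the linearized denominator positive). The plan is to compare the two ratios in Eq.~\ref{eq:attn-score-and-linear-score} term by term via Taylor's theorem with remainder. Write $\exp(x_l) = 1 + x_l + r_l$, where, by the Lagrange form of the remainder, $|r_l| \le \tfrac{e^{\epsilon}}{2}x_l^2 \le \tfrac{e}{2}\epsilon^2$. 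Set $N = 1 + x_j$, $S = \sum_l (1+x_l) = n + \sum_l x_l$, and $R = \sum_l r_l$. Then $a_{ij} = (N + r_j)/(S + R)$ and $\tilde a_{ij} = N/S$.

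Next, compute the difference directly:
\[
a_{ij} - \tilde a_{ij} = \frac{(N+r_j)S - N(S+R)}{S(S+R)} = \frac{r_j S - N R}{S(S+R)}.
\]
We bound the numerator by $|r_j|S + |N||R| \le \tfrac{e}{2}\epsilon^2\bigl(S + (1+\epsilon)n\bigr)$. For the denominator we need lower bounds. First, $S \ge n(1-\epsilon) > 0$, which is exactly where $\epsilon<1$ enters. Second, $S + R = \sum_l e^{x_l} \ge n e^{-\epsilon}$. Using also $S \le n(1+\epsilon)$, the numerator is $O(n\epsilon^2)$ and the denominator is $\Omega(n^2)$.

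Combining these gives
\[
|a_{ij}-\tilde a_{ij}| \le \frac{e^{1+\epsilon}(1+\epsilon)}{(1-\epsilon)}\cdot\frac{\epsilon^2}{n} = O(\epsilon^2),
\]
with a constant that is uniform in $i$ and $j$. In fact the bound is $O(\epsilon^2/n)$, which is stronger than the claimed $O(\epsilon^2)$.

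The main obstacle is controlling the denominator. We must keep $S$ bounded away from zero so that the linearized weights $\tilde a_{ij}$ are well defined, and we must ensure the implicit constant does not blow up. This requires the hypothesis $\epsilon<1$, and the constant degrades like $1/(1-\epsilon)$. If a cleaner statement is wanted, one can additionally restrict to $\epsilon\le 1/2$ to get an absolute constant.
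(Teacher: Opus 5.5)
Your proof is correct and follows essentially the paper's route: a Lagrange remainder bound for $e^{x}$, then the same error decomposition, with lower bounds on the two denominators; your single fraction $(r_jS-NR)/\bigl(S(S+R)\bigr)$ is just the paper's terms (I) and (II) over a common denominator. One point in your favor is that your bound $S+R=\sum_l e^{x_l}\ge ne^{-\epsilon}$ is the correct one, whereas the paper's $D_{\exp}\ge n-\tfrac{n}{2}e^{\epsilon}\epsilon^2$ tacitly uses $D_{\lin}\ge n$, which fails when the $x_l$ are negative; the conclusion $O(\epsilon^2)$ still holds there, and in fact both arguments give $O(\epsilon^2/n)$, as you note.
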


This corollary shows that the approximation error introduced in Eq.~\ref{eq:linear-attn} is second-order with respect to the query–key dot product. 
Since normalized query and key vectors tend to be nearly orthogonal in high-dimensional spaces, their dot products are typically small, and the resulting approximation error is negligible in practice.

\subsubsection{Inter-table Interaction Simplification}
Motivated by recent studies that investigate the connection between Transformers and graph neural networks~\cite{kim2022pure}, we adopt the HGNN framework as an alternative to the standard Transformer for modeling inter-table interactions.
\begin{proposition}
Let the bipartite graph be \(\mathcal{G} = (\mathcal{V}, \mathcal{U}, \mathcal{E})\), where \(\mathcal{V}\) and \(\mathcal{U}\) denote the sets of rows in the target and auxiliary tables (treated as nodes), and \(\mathcal{E}\) denotes the set of edges induced by PK--FK relationships.  
The inter-table interaction calculated in Eq.~\ref{eq:inter-attn} is equivalent to a single round of message passing over \(\mathcal{G}\), with attention weights serving as edge weights.
\label{props:inter-attn-gnn}
\end{proposition}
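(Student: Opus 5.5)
The plan is to write out the masked cross-attention of Eq.~\ref{eq:inter-attn} row by row and then read the result as a message-passing update of the form Eq.~\ref{eq:hgnn}. Set $X = \operatorname{LN}(H_{\text{inter}}^{(l)})$ and $Y = \operatorname{LN}(E_{\text{aux}})$. Then $Q = XW_Q$, $K = YW_K$, $V = YW_V$, and the masked score matrix is $A = QK^\top/\sqrt{d} + M$. Here $M_{ij}=0$ if $(v_i,u_j)\in\mathcal{E}$ and $M_{ij}=-\infty$ otherwise. This makes explicit that the mask and the edge set $\mathcal{E}$ of $\mathcal{G}$ are the same object: $M$ is the log of the biadjacency matrix of $\mathcal{G}$, up to the convention $\log 0=-\infty$.

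The first key step is to evaluate the Softmax with the mask in place. Since $\exp(-\infty)=0$, the $i$-th row gives
\[
\hat{\vec h}_i=\sum_{j=1}^{n}\frac{\exp(A_{ij})}{\sum_{l}\exp(A_{il})}\,\vec v_j
=\sum_{u_j\in\mathcal{N}(v_i)} \alpha_{ij}\,\vec v_j,
\qquad
\alpha_{ij}=\frac{\exp(\vec q_i^\top\vec k_j/\sqrt d)}{\sum_{u_l\in\mathcal{N}(v_i)}\exp(\vec q_i^\top\vec k_l/\sqrt d)}.
\]
In this expression $\mathcal{N}(v_i)$ is the set of auxiliary-table neighbours of target row $v_i$ in $\mathcal{G}$. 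So every term whose row pair has no PK--FK link drops out, and the sum runs only over edges of $\mathcal{G}$.

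Next, I would match this expression to Eq.~\ref{eq:hgnn}. The edge type is $r$, the single relation target$\leftarrow$aux. The message along edge $(u_j,v_i)$ is $\vec v_j = W_V^\top \vec y_j$, a type-specific linear transform of the source node feature. $\operatorname{AGG}_r$ is the weighted sum with edge weights $\alpha_{ij}$; these depend only on the two endpoint features, as in GAT-style attention. $\operatorname{UPD}_{\psi(v)}$ is the identity for this step, or equivalently the FFN and residual of Eq.~\ref{eq:inter-forward} if we want the whole block. This shows the layer is one round of message passing from $\mathcal{U}$ to $\mathcal{V}$ over $\mathcal{G}$, with the attention weights as normalized edge weights. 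Conversely, any such attention-weighted message-passing round over $\mathcal{G}$ with these $\alpha_{ij}$ reproduces the masked cross-attention, which gives equivalence rather than just inclusion.

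The main obstacle is a degenerate case. If a target row has no PK--FK neighbour, the masked row of $A$ is entirely $-\infty$, and the Softmax gives $0/0$. I would handle this by adopting the convention that an empty neighbourhood aggregates to the zero vector, which matches the usual GNN convention for isolated nodes. Alternatively, one can assume every target row has at least one link. A second point I would remark on is cost: the rewritten sum touches only $|\mathcal{E}|$ pairs instead of $n^2$. This is what justifies replacing the cross-attention with an HGNN. Multi-head and multi-table extensions follow by summing or concatenating over heads and relation types $r\in\mathcal{R}$.
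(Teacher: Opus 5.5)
Your proposal is correct and is essentially the paper's argument read in the opposite direction: the paper defines an HGNN layer with a neighbourhood-softmax aggregator and identity update and then rewrites it as $\mathrm{Softmax}(QK^\top + \mathrm{mask})V$, which is the same identity as your row-wise unrolling of the masked cross-attention. Your version is slightly more faithful to Eq.~\ref{eq:inter-attn} than the paper's. You keep $W_Q, W_K, W_V$, the layer normalization and the $\sqrt{d}$ temperature, whereas the paper takes $Q$ and $K=V$ to be raw node features and omits both LN and temperature. You also index the mask so that row $i$ matches target row $v_i$, and you handle the $0/0$ case of a target row with no PK--FK neighbour, which the paper leaves implicit.
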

This proposition demonstrates that the cross-attention mechanism in Transformers can be effectively replaced by sparse HGNN layers, leading to substantial reductions in both time and space complexity. 
In particular, relational tables can be abstracted as a heterogeneous graph, where each PK--FK relation defines a distinct edge type. 
This allows the HGNN to model cross-table interactions in a computationally efficient manner.

Let the rows in the target and auxiliary tables correspond to heterogeneous graph nodes $\{v_i\}_{i=1}^{n}$, $\{u_j\}_{j=1}^{n}$ respectively, and $\vec{h}_{v_i}^{(l)}$, $\vec{h}_{u_j}^{(l)}$ denote the representations of the $i$-th row and $j$-th row treated as nodes in the target and auxiliary tables at layer $l$, initialized from the corresponding 
rows in $E_{\text{tgt}}$ and $E_{\text{aux}}$. Then, by reformulating Eq.\ref{eq:inter-attn} and Eq.\ref{eq:inter-forward} using Eq.~\ref{eq:hgnn}, we obtain:
\begin{equation}
  \vec{h}_{v_i}^{(l+1)}
    = \operatorname{UPD}_{\psi(v_i)}\Bigl(
        \vec{h}_{v_i}^{(l)},\,
        \operatorname{AGG}_r\{\vec{h}_{u_j}^{(l)} \mid u_j\in\mathcal{N}_{v_i}\}
      \Bigr)
  \label{eq:inter-hgnn}
\end{equation}
Consequently, for the target table, the final inter-table interaction representation is $H_{\text{inter}}^{(l+1)}=\bigl[\vec{h}_{v_1}^{(l+1)},\dots,\vec{h}_{v_n}^{(l+1)}\bigr]$.

\subsubsection{Combination of Two Interactions}
Similarly, after simplification, we compute both intra- and inter-table representations independently and fuse them using Eq.~\ref{eq:fusion}.
Furthermore, it is worth noting that this simplification enables our method to be combined with mini-batch sampling, allowing interactions to be modeled only on sampled subtables and subgraphs. This leads to efficient learning on large-scale relational table data.
For completeness, the overall training procedure of \modelname\ is summarized in Algorithm~\ref{alg:inrtl} in Appendix~\ref{appendix:Pseudocode}.

\section{Algorithm Analysis}
\label{sec:analysis}
In this section, we first analyze the time complexity of our method and then discuss its relation to existing approaches.

\subsection{Time Complexity Analysis}
\label{subsubsec:complexity}
The computational cost of our method arises from three main components: the column-aware table encoder, the linearized self-attention for intra-table interaction, and the HGNN message passing for inter-table interaction.
First, for the column-aware table encoder, the primary computational cost arises from the ResNet, which has a complexity of $\mathcal{O}(n m d^2)$, where $n$ and $m$ denote the number of rows and columns in the table, respectively.
Second, for intra-table interaction, the linear attention mechanism (i.e., Eq.~\ref{eq:linear-attn}) operates with a time complexity of $\mathcal{O}(nd^2)$.
Third, for inter-table interaction, the HGNN performs message passing on the heterogeneous graph with a single-layer complexity of $\mathcal{O}(ed)$, where $e$ represents the number of edges induced by PK--FK relations. 
In summary, the overall time complexity is $\mathcal{O}(n m d^2 + n d^2 + e d)$.
In practice, the number of rows and edges are typically much larger than the number of columns and feature dimension (i.e., $n,e \gg m,d$). 
Therefore, the overall complexity scales linearly with the number of table rows and the number of inter-table PK--FK relations.

\subsection{Relation to Prior Linear Attention Methods}
In this subsection, we compare our Taylor-expanded linear attention with several prior linearization methods.
To maintain notational consistency, we continue using the symbols introduced in Section~\ref{sec:methodology} throughout the discussion.

\subsubsection*{
\textbf{Relation to Linear Transformer~\cite{pmlr-v119-katharopoulos20a}}}
Linear Transformer linearizes the classical softmax attention (i.e., Eq.~\ref{eq:orig-attn}) using a kernel-based feature mapping $\phi(\cdot)$ applied independently to query and key representations:

\begin{equation}
\begin{aligned}
    \mathrm{Attn}(X) \approx & \ D^{-1}\phi(Q)\bigl(\phi(K)^\top V\bigr),\\
    D =\mathrm{Diag}&\bigl(\phi(Q)(\phi(K)^\top \vec{\mathbf{1}}_n)\bigr)
\end{aligned}
\end{equation}
The choice of $\phi(\cdot)$ (e.g., $\text{elu}(\cdot)+1$) is empirically driven without an exact theoretical derivation, making the approximation sensitive to kernel choice and input distributions.
In contrast, our method (i.e., Eq.~\ref{eq:linear-attn}) is based on a strong theoretical foundation, by directly applying a first-order Taylor expansion of the exponential term in the Softmax. Our formulation is deterministic, interpretable, and requires no additional hyper-parameters.
In fact, Linear Transformer only applies kernel $\phi(\cdot)$ to $Q$ and $K$ separately, if the kernel \(\phi(\cdot)\) is applied jointly to \(QK^\top\) such that \(\phi(QK^\top) = J_{n} + QK^\top\), it will directly yield our formulation (i.e., Eq.~\ref{eq:linear-attn}).

\subsubsection*{\textbf{Relation to Performer~\cite{choromanski2021rethinking}}}
Similar to Linear Transformer mentioned above, Performer also linearizes the softmax attention by using a kernel-based feature map. 
Specifically, it constructs a set of random features that serve as unbiased estimators of the Softmax kernel:
\begin{equation}
\mathrm{Attn}(X) \approx \mathbb{E}_{\vec{\boldsymbol{\omega}}}\Bigl[\phi_{\vec{\boldsymbol{\omega}}}(Q)\bigl(\phi_{\vec{\boldsymbol{\omega}}}(K)^{\top}V\bigr)\Bigr]
\end{equation}
where $\vec{\boldsymbol{\omega}}\sim\mathcal{N}(\mathbf{0},I_d)$ is a $d$-dimensional standard Gaussian, $\phi_{\vec{\boldsymbol{\omega}}}(\vec{\mathbf{x}})=\exp(\vec{\boldsymbol{\omega}}^\top \vec{\mathbf{x}}-\tfrac12\|\vec{\mathbf{x}}\|^2)$, $I_d$ is the $d\times d$ identity matrix, and $\mathbb{E}_{\vec{\boldsymbol{\omega}}}(\cdot)$ denotes the expectation over the $\vec{\boldsymbol{\omega}}$.
In practice, the expectation is approximated by averaging over a few samples of $\vec{\boldsymbol{\omega}}$, which introduces Monte Carlo variance. In contrast, Eq.~\ref{eq:linear-attn} in our method attains linear complexity without relying on random features. This is accomplished through a first-order Taylor expansion, resulting in a deterministic formulation that requires no additional hyper-parameters.

\begin{table}[!t]
\caption{Statistical analysis of the used benchmark datasets.}
\label{tab:datasets}
\centering
\begin{tabular}{llrr}
\toprule
Benchmark & Dataset & \#Tables & \#Rows \\
\midrule
\multirow{3}{*}{SJTUTables}
&TACM12K             & 4         & 97,774         \\
&TLF2K               & 3         & 101,773        \\
&TML1M               & 3         & 1,010,132      \\
\midrule
\multirow{7}{*}{RelBench}
&rel-f1              & 9         & 97,606          \\
&rel-trial           & 15        & 5,852,157      \\
&rel-avito           & 8         & 20,679,117     \\
&rel-amazon          & 3         & 24,291,489     \\
&rel-hm              & 3         & 33,265,846     \\
&rel-stack           & 7         & 38,109,828     \\
&rel-event           & 5         & 41,328,337     \\
\bottomrule
\end{tabular}
\end{table}

\begin{table}[!tbp]
  \caption{Classification results on SJTUTables (Accuracy \%, $\uparrow$).\protect\footnotemark}
  \label{tab:result_1}
  \centering
  \begin{tabular}{lccc|>{\columncolor{gray!15}}c}
    \toprule
    Model & TML1M & TLF2K & TACM12K & Avg. Rank\\
    \midrule
    LightGBM        & 26.66 & 35.52 & 35.70 & 5.7 \\
    TabPFN          & 25.20 & 43.20 & 34.10 & 5.3 \\
    FT-Transformer  & 28.60 & 15.80 & 14.80 & 8.3 \\
    TabNet          & 24.40 & 15.50 & 21.20 & 9.3 \\
    SAINT           & 27.90 & 13.70 & 16.50 & 9.0 \\
    Trompt          & 27.80 & 13.40 & 12.40 &  10.3 \\
    ExcelFormer     & 31.90 & 20.60 & 17.10 & 6.7 \\
    BRIDGE          & 36.20 & 42.20 & 25.60 & 3.3 \\
    RDL             & 35.60 & 28.70 & 19.20 & 5.3 \\
    RelGNN          & 27.12 & 20.31 & 18.19 & 8.0 \\
    LightRDL        & 24.94 & 39.17 & 35.77 & 5.7 \\
    \modelname      & \textbf{40.60} & \textbf{45.80} & \textbf{48.40} & \textbf{1.0}\\
    \bottomrule
  \end{tabular}
\end{table}
\footnotetext{The updated TLF2K dataset excludes artist embeddings, reducing accuracy from 50.10\% to 45.80\%.}

\begin{table*}[!t]
\caption{Binary classification results on RelBench (ROC-AUC \%, $\uparrow$).}
\label{tab:result_2}
\centering
\small
\begin{tabular}{llcccccccccc}
\toprule
Dataset & Task & LightGBM & TabPFN & FT-Trans & Trompt & ExcelFormer & RDL & RelGNN & BRIDGE & LightRDL & \modelname \\
\midrule
\multirow{2}{*}{rel-amazon} 
    & user-churn     & 52.22  & 60.51   & 52.30 & 53.10 & 52.09       & 70.42 & 65.44 & 70.51 & 69.17 & \textbf{70.53} \\
    & item-churn     & 62.54   & 63.48  & 64.15 & 59.82 & 62.50       & 82.81 & 82.64 & 79.93 & 81.20 & \textbf{82.83} \\
\midrule
\multirow{2}{*}{rel-avito} 
    & user-visits          & 53.05 & 65.34   & 52.10 & 56.11 & 55.01        & 65.99 & 65.47 & 66.16 & 62.39 & \textbf{66.25} \\
    & user-clicks          & 53.60 & 64.58   & 51.32 & 54.23 & 55.32        & 65.76 & 66.13 & 65.01 & 64.11 & \textbf{67.03} \\
\midrule
\multirow{2}{*}{rel-event} 
    & user-repeat   & 68.04 & 65.85 & 69.10 & 67.17 & 69.73        & 75.60 & 73.07 & 75.30 & 71.32 & \textbf{78.73} \\
    & user-ignore   & 79.93 & 83.24 & 75.69 & 77.31 & 77.92        & 75.14 & 79.11 & 76.16 & 76.30 & \textbf{85.80} \\
\midrule
\multirow{2}{*}{rel-f1}
    & driver-dnf      & 68.56 & 68.38 & 68.31 & 67.52 & 67.20      & 71.25 & 67.06  & 70.16 & 68.63 & \textbf{74.60} \\
    & driver-top3     & 73.92 & 72.18 & 73.62 & 73.17 & 75.22      & 77.23 & 80.86  & 77.13 & 79.16 & \textbf{83.21} \\
\midrule
\multirow{1}{*}{rel-hm} 
    & user-churn     & 55.21 & 69.75   & 50.13 & 54.19 & 53.95         & 70.11 & 68.10 & 69.33 & 67.13 & \textbf{70.32} \\
\midrule
\multirow{2}{*}{rel-stack} 
    & user-engagement   & 63.39  & 88.34    & 60.02 & 60.73  & 64.20         & 90.47 & 89.95 & 89.74 & 89.02 & \textbf{90.56} \\
    & user-badge        & 63.43  & 86.73    & 60.36 & 62.82  & 67.13         & 88.86 & 88.98 & 87.90 & 86.71 & \textbf{89.02} \\
\midrule
\multirow{1}{*}{rel-trial} 
    & study-outcome     & 70.09 & 67.88 & 62.39 & 65.73 & 67.31     & 69.02 & 69.16 & 68.98 & 68.15 & \textbf{70.20} \\
\midrule
\rowcolor{gray!15}
\multicolumn{2}{c}{Avg. Rank} & 6.92 & 6.00 & 8.75 & 8.25 & 7.58 & 3.33 & 3.92 & 4.08 & 5.17 & \textbf{1.00} \\
\bottomrule
\end{tabular}
\end{table*}

\subsubsection*{\textbf{Relation to Linformer~\cite{wang2020linformerselfattentionlinearcomplexity}}}
Linformer reduces the quadratic complexity of self-attention by 
introducing a low-rank projection on the key and value matrices:
\begin{equation}
    \mathrm{Attn}(X)
    \approx 
    \mathrm{softmax}\!\left(\frac{Q(\hat{K}K)^\top}{\sqrt{d}}\right)(\hat{V}V)
\end{equation}
where \(\hat{K}, \hat{V} \in \mathbb{R}^{d' \times n}\) are learnable projection matrices with \(d' \ll n\).
This formulation assumes that the attention matrix is inherently low-rank, 
thus approximating the Softmax weights through dimension reduction. However, a small projection dimension inevitably restricts the expressive power of the resulting attention scores, leading to potential information loss.
In contrast, as shown in Eq.~\ref{eq:attn-score-and-linear-score}, our Taylor-based approach directly linearizes the exponential kernel:
\(
e^{\vec{q_u}^\top \vec{k_v}} \approx 1 + \vec{q_u}^\top \vec{k_v}
\), yielding a closed-form expression, which achieves linear complexity \({O}(nd^2)\) without imposing any low-rank assumption.  
Therefore, our method preserves all pairwise interactions in full-rank space while maintaining efficiency comparable to Linformer.

In summary, prior linear attention methods achieve efficiency through various approximate strategies that may introduce uncontrolled uncertainty, such as heuristic kernel design, stochastic random features, or structural low-rank constraints.
Our method, in contrast, leverages deterministic Taylor linearization in inner-product space, improving stability and theoretical completeness.

\section{Experiments}
\label{sec:exps}
In this section, we conduct comprehensive experiments to evaluate our method against state-of-the-art baselines.

\subsection{Experimental Setup}
\subsubsection*{\textbf{Datasets}}
As summarized in Table~\ref{tab:datasets}, we conduct experiments on two public relational table learning benchmarks.
The first is SJTUTables~\cite{li2024rllm}, which includes three datasets covering domains such as online movies, online music, and academic paper citations.
Following the original benchmark configuration, we report accuracy as the evaluation metric for its three multi-class classification tasks.
The second benchmark, RelBench~\cite{robinson2024relbench}, includes seven datasets covering areas such as e-commerce, healthcare, and online communities.
This benchmark involves 21 tasks, including 12 binary classification tasks and 9 regression tasks.
In accordance with the benchmark guidelines, we adopt ROC-AUC for classification tasks and Mean Absolute Error (MAE) for regression tasks.
Additionally, for all experiments, we follow the standard data preprocessing and train/val/test split provided by each benchmark.

\subsubsection*{\textbf{Baselines}}
We compare our method with various baselines, categorized into three main groups.
The first group comprises traditional decision tree models, represented by LightGBM~\cite{ke2017lightgbm}, a gradient boosting framework known for its efficiency and strong performance on tabular data.
The second group includes deep learning-based single-table models: TabPFN~\cite{hollmann2022tabpfn}, FT-Transformer~\cite{gorishniy2021revisiting}, TabNet~\cite{arik2021tabnet}, SAINT~\cite{somepalli2021saint}, Trompt~\cite{pmlr-v202-chen23c}, and ExcelFormer~\cite{chen2023excelformer}.
These methods leverage attention mechanisms and advanced representation learning techniques to capture intra-table dependencies. 
The third group consists of relational table learning methods like BRIDGE~\cite{li2024rllm}, RDL~\cite{robinson2024relbench}, LightRDL~\cite{lachi2025boostingrelationaldeeplearning} and RelGNN~\cite{chen2025relgnn}.
The details about these baselines can be found in Appendix~\ref{appendix:baselines}.

\subsubsection*{\textbf{Implementation Details}}
We re-ran and carefully tuned all models using their official implementations. For RelGNN and LightRDL, which lacked complete training scripts, we re-implemented the missing components to ensure consistent evaluation. All reported results are averaged over 20 independent runs. 
Our experimental pipeline is built on the open-source relational table learning library relationLLM (rLLM)\footnote{\url{https://github.com/rllm-project/rllm}}, and additional implementation details are provided in Appendix~\ref{appendix:exp_details}.

\begin{table*}[t]
\caption{Regression results on RelBench (MAE, $\downarrow$).}
\label{tab:result_3}
\centering
\small
\begin{tabular}{llcccccccccc}
\toprule
Dataset & Task  & LightGBM & TabPFN & FT-Trans & Trompt & ExcelFormer & RDL & RelGNN & BRIDGE & LightRDL & \modelname \\
\midrule
\multirow{2}{*}{rel-amazon} 
    & user-ltv     & 16.783  & 19.413   & 16.930 & 16.606 & 17.001       & 14.313 & 16.783 & 15.374 & 14.310 & \textbf{14.250} \\
    & item-ltv     & 60.569  & 70.796   & 60.319 & 61.424 & 59.983       & 50.053 & 48.826 & \textbf{47.944} & 48.112 & 48.805 \\
\midrule
\multirow{1}{*}{rel-avito} 
    & ad-ctr          & 0.041 & 0.044   & 0.049 & 0.045 & 0.050        & 0.041 & 0.038 & 0.039 & 0.040 & \textbf{0.037} \\
\midrule
\multirow{1}{*}{rel-event} 
    & user-attendance   & 0.264 & 0.410 & 0.270 & 0.281 & 0.264        & 0.258 & 0.248 & 0.258 & 0.260 & \textbf{0.241} \\
\midrule
\multirow{1}{*}{rel-f1} 
    & driver-position     & 4.170 & 5.068 & 4.210 & 4.191 & 4.112      & 4.172 & 4.250  & 4.179 & 4.079 & \textbf{3.904} \\
\midrule
\multirow{1}{*}{rel-hm} 
    & item-sales     & 0.076 & 0.094   & 0.092 & 0.073 & 0.073         & 0.056 & 0.054 & 0.060 & \textbf{0.044} & 0.054 \\
\midrule
\multirow{1}{*}{rel-stack} 
    & post-votes   & 0.068  & 0.079     & 0.074 & 0.080  & 0.070         & 0.065 & 0.065 & 0.066 & 0.064 & \textbf{0.063} \\
\midrule
\multirow{2}{*}{rel-trial} 
    & study-adverse     & 44.011 & 45.609 & 44.490 & 45.397 & 45.552     & 44.473 & 44.461 & 44.773 & 43.910 & \textbf{43.522} \\
    & site-success     & 0.425 & 0.460   & 0.431 & 0.440   & 0.429      & 0.400  & 0.355  & 0.390   & 0.411 & \textbf{0.331} \\
\midrule
\rowcolor{gray!15}
\multicolumn{2}{c}{Avg. Rank} & 5.94 & 9.56 & 7.89 & 7.94 & 7.11 & 4.28 & 3.94 & 4.17 & 2.78 & \textbf{1.39} \\
\bottomrule
\end{tabular}
\end{table*}

\subsection{Performance Evaluation on Benchmarks}
Table~\ref{tab:result_1} presents the multi-class classification results on SJTUTables, while Table~\ref{tab:result_2} and Table~\ref{tab:result_3} report the binary classification and regression results on RelBench, respectively. Best values are in bold.
We summarize the following key observations from the results.

First, we find that relational table learning methods generally outperform single-table approaches, including both shallow and deep learning models. This highlights that leveraging multiple tables and their PK--FK relationships can significantly improve task performance, underscoring the strong potential of relational table learning.
Second, our method achieves consistently strong performance across all datasets and tasks.
This suggests that the proposed intra- and inter-table interaction mechanisms effectively capture multidimensional information to enhance performance.
This also demonstrates the generalizability of our approach and its potential for widespread application.
Third, we observe significant variation in the performance gains of our method across different datasets and tasks.
For example, substantial improvements are observed on TACM12K, rel-f1, and rel-event, whereas the gains are smaller on rel-amazon and rel-stack.
This variability likely stems from dataset-specific factors, such as differences in table size, sparsity, and PK--FK graph topology.
These insights motivate further investigation into how such data factors influence model performance, which presents a promising avenue for future research.
Last, on the SJTUTables dataset, we observe that two relational table learning methods (RDL and RelGNN) perform relatively poorly.
This may be because their implementations force each text column to use simple GloVe~\cite{pennington2014glove} embeddings independently.
In contrast, following BRIDGE, our method utilizes preprocessed embeddings from BERT~\cite{devlin2019bert} for entire rows of auxiliary text tables, enhancing textual understanding. Text encoder choice can materially affect relational table learning. Therefore, evaluating under different settings helps separate architectural gains from encoder gains. Further experiments on different text embedding choices, including GloVe and a lightweight BERT-family encoder (all-MiniLM-L6-v2~\cite{wang2021minilmv2multiheadselfattentionrelation}), are provided in Appendix~\ref{subsect_unified_embeddings}.

\subsection{Ablation Study}
This part is designed to assess the contribution of three major components in our method: the column-aware table encoder, the linear-attention-based intra-table interaction, and the HGNN-based inter-table interaction (denoted as ColATE, LIN-ATTN, and HGNN). 
We remove one component at a time from the full model and measure the performance to assess its individual contribution. 
Due to space limitations, we perform ablation studies on three representative datasets: rel-f1, TML1M, and rel-amazon, covering small-, medium-, and large-scale classification tasks.
Similar results are found on other datasets as well.

The results are presented in Table~\ref{tab:result_4}. We observe that each of the three components contributes to the overall performance of {\modelname}. Removing any module results in a noticeable performance degradation.
This demonstrates the effectiveness of the three core components in our method, underscoring the necessity of jointly modeling both intra- and inter-table interactions to fully exploit relational structures in tabular data.

\begin{table}[!t]
\caption{Ablation results, where TML1M is evaluated using accuracy \%, and the others are evaluated using ROC-AUC \%.}
  \label{tab:result_4}
  \centering
  \resizebox{\linewidth}{!}{
  \begin{tabular}{llcccc}
    \toprule
  \multirow{2}{*}{Dataset}
    & \multirow{2}{*}{Task}
    & \multirow{2}{*}{\modelname}
    & \multicolumn{3}{c}{Without} \\
  \cmidrule(lr){4-6}
    &       &       
    & ColATE  & LIN-ATTN   & HGNN   \\
  \midrule
    TML1M    & age-cls    
             & \textbf{40.60}     
             & 37.70  & 38.60  & 39.80     \\
    \addlinespace
    \multirow{2}{*}{rel-f1}
             & driver-dnf   
             & \textbf{74.60}     
             & 73.10  & 72.40  & 71.40     \\
             & driver-top3 
             & \textbf{83.90}     
             & 80.90  & 80.70  & 80.40     \\
    \addlinespace
    \multirow{2}{*}{rel-amazon}
             & user-churn   
             & \textbf{70.53}     
             & 70.49  & 70.45  & 56.07     \\
             & item-churn    
             & \textbf{82.80}     
             & 82.40  & 82.60  & 64.05     \\
    \bottomrule
  \end{tabular}
  }
\end{table}

\subsection{Hyper-parameter Analysis}
This subsection investigates the impact of two key hyper-parameters in our method. These experiments are all performed on the rel-f1 dataset under its default classification settings (driver-dnf and driver-top3). 
Consistent trends are observed across other datasets.

\begin{figure}[!t]
\centering
    \includegraphics[width=0.47\textwidth]{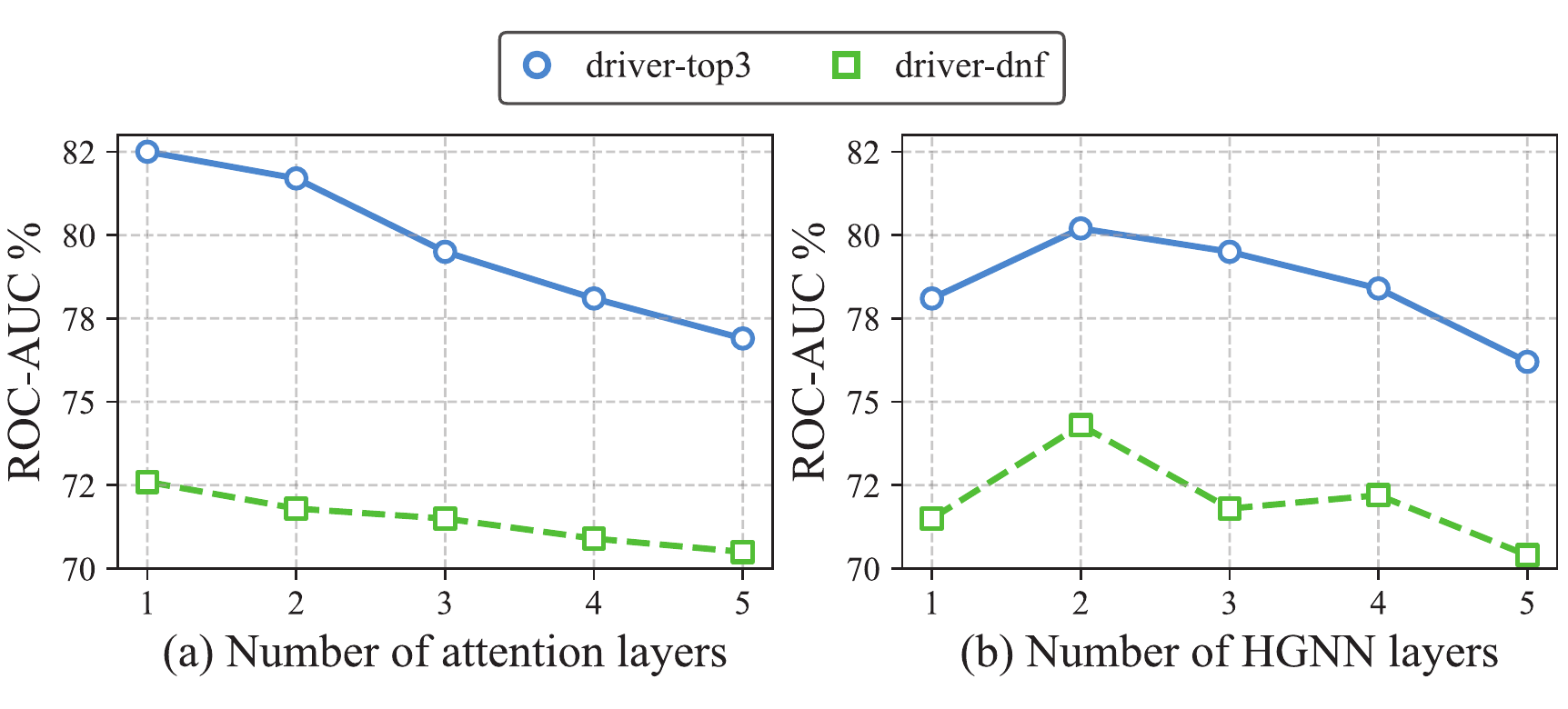}
\caption{Performance variation with respect to model depth.
(a) Linear attention layers for intra-table modeling.
(b) HGNN layers for inter-table modeling.
}
\Description{Performance variation with respect to model depth.}
\label{fig:depth}
\end{figure}   

\noindent\subsubsection*{\textbf{Effect of Model Depth}} 
We examine two depth-related factors: 
(i) the number of linear attention layers for intra-table interaction and 
(ii) the number of HGNN layers for inter-table interaction.
For each setting, we independently vary the number of layers, while keeping all other components fixed.
As shown in Figure~\ref{fig:depth}, we observe that increasing the number of linear attention layers consistently degrades performance.
We attribute this to the expressive power of multi-head attention, which captures sufficient intra-table interactions in the early stages, while deeper layers may introduce noise.
On the other hand, HGNN performance is non-monotonic: with increasing depth, the performance first rises then falls, peaking at two layers.
This aligns with well-known findings in GNNs.
Specifically, a single layer generally lacks sufficient receptive field to capture informative multi-hop dependencies across relational tables, while deeper networks risk exponential neighbor growth or over-smoothing~\cite{wang2024cluster}.
Therefore, using two HGNN layers achieves a good trade-off between expressiveness and efficiency.

\begin{figure}[!t]
\centering
\includegraphics[width=0.47\textwidth]{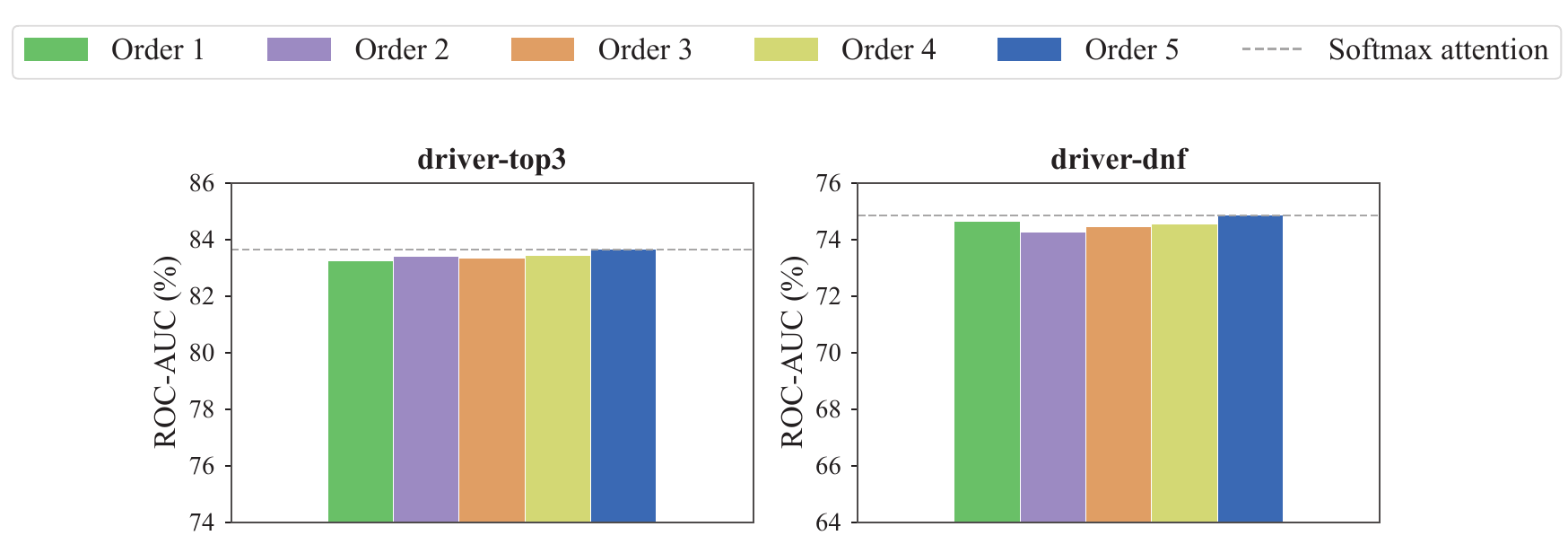}
\caption{ROC-AUC comparison of attention methods with different taylor series orders.}
\Description{ROC-AUC comparison of attention methods with different taylor series orders, with softmax attention as a reference line.}
\label{fig:attn_with_order}
\end{figure}

\noindent\subsubsection*{\textbf{Effect of Order Number in Taylor Expansion}}
We investigate the impact of the Taylor expansion order on attention optimization for modeling intra-table interactions.
In this experiment, all other components of \modelname are fixed, and we compare different attention mechanisms by varying the order of the Taylor expansion of the exponential function in Softmax.
As an important baseline, we also evaluate the original Transformer architecture (i.e., without any Taylor expansion).
As shown in Figure~\ref{fig:attn_with_order}, the performance gradually approaches that of the original Softmax attention as the Taylor expansion order increases.
We also find that the overall difference remains small, even with the first-order expansion.
Since only the first-order expansion has linear complexity (higher orders are all quadratic), we adopt it to achieve an effective balance between computational efficiency and modeling precision.

\subsection{Scalability Analysis}
To evaluate scalability, we synthetically generated several large graph datasets. In these datasets, each node is treated as a row in a table with 5 columns on average, and the total number of nodes ranges from 10K to 200K, with an average node degree of 5. We ran \modelname on these datasets and recorded the runtime of each stage for one epoch to assess its scalability.

\begin{figure}[!tbp]
\centering
\includegraphics[width=0.4\textwidth]{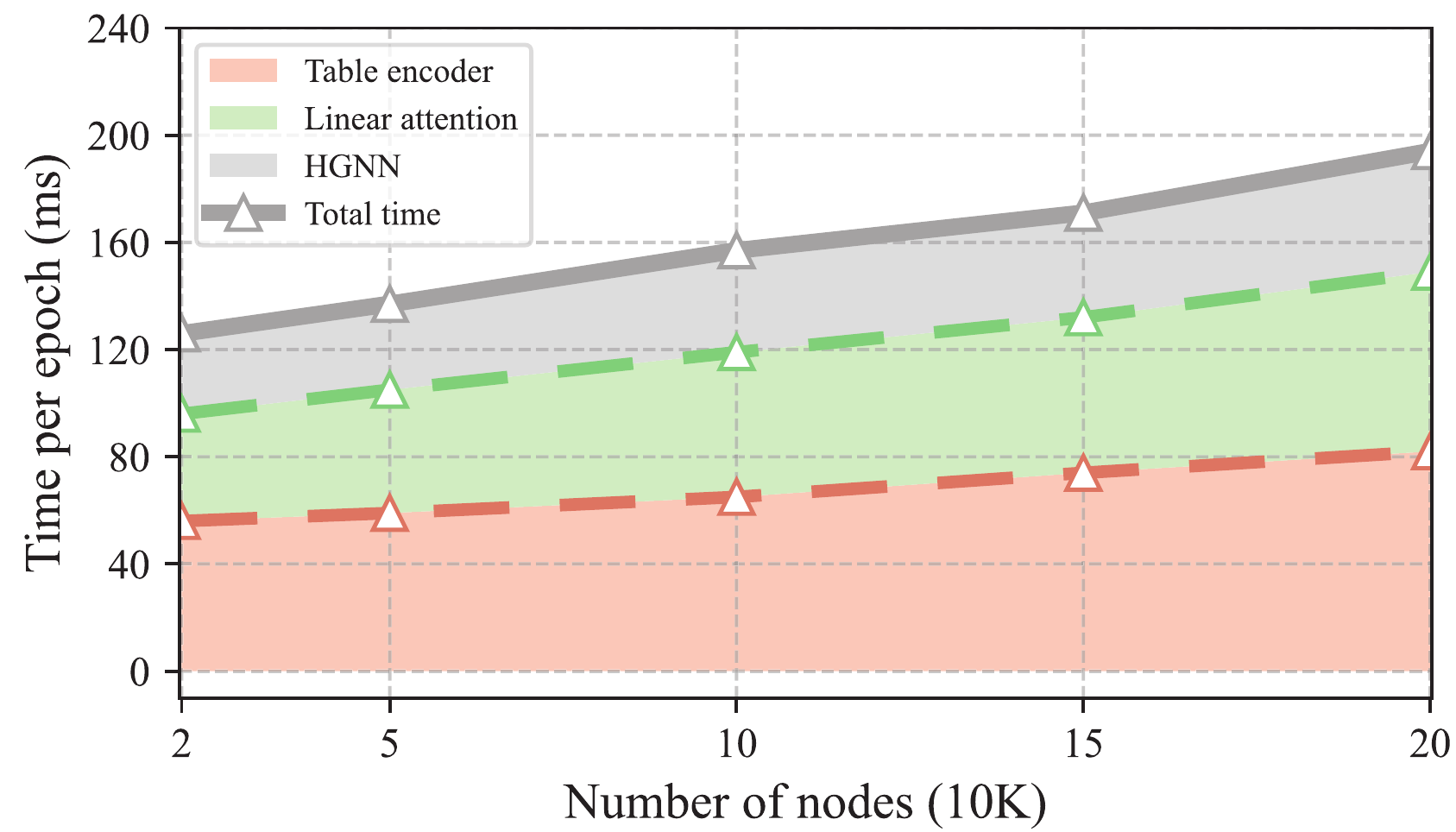} 
\caption{Scalability test of training time per epoch w.r.t. graph sizes (a.k.a. node numbers).}
\Description{Scalability test of training time per epoch w.r.t. graph sizes (a.k.a. node numbers).}
\label{fig:scalability}
\end{figure}

As shown in Figure~\ref{fig:scalability}, we can clearly see that the overall training time of \modelname grows approximately linearly with the number of nodes. The table encoder and linear attention module components dominate the runtime and exhibit similar growth patterns as the dataset size increases. This is because the table encoder part processes each feature column of every node, while the linear attention part operates on the node features, leading both components to scale roughly proportionally with the number of nodes and feature dimensions. In contrast, the HGNN component contributes a relatively small fraction of the total runtime, and its growth is moderate due to the fixed average node degree in the synthetic datasets. These experimental results confirm our theoretical expectations and demonstrate that \modelname scales efficiently to large datasets.

\section{Related Work}
\label{sec:relwork}
In this section, we review prior work on deep learning for single-table and multi-table tabular data. This section is organized into single-table modeling and multi-table modeling to clarify their assumptions and limitations.

\subsection{Deep Learning on Single Table}
Deep learning for single-table tabular data primarily focuses on learning expressive feature representations and capturing complex feature interactions without manual feature engineering~\cite{borisov2022deep}. 
A prominent line of research is based on attention mechanisms~\cite{vaswani2017attention}, where features are treated as tokens and modeled through self-attention. 
Representative methods such as TabTransformer~\cite{huang2020tabtransformertabulardatamodeling} and FT-Transformer~\cite{gorishniy2021revisiting} learn contextualized feature representations by modeling pairwise and higher-order feature dependencies within a table.
Another line of work emphasizes feature selection and interpretability. Models such as TabNet~\cite{arik2021tabnet} and ExcelFormer~\cite{chen2023excelformer} dynamically select and reweight informative features while suppressing irrelevant ones, thereby improving both predictive performance and interpretability.
More recently, prompt-inspired approaches such as Trompt~\cite{pmlr-v202-chen23c} incorporate feature semantics and instance-level importance into unified representations, bridging ideas from language modeling and tabular learning. Despite their effectiveness, these methods are fundamentally designed for isolated single tables and do not explicitly model relational dependencies across multiple tables.

\subsection{Deep Learning on Multiple Tables}
Recent efforts have increasingly explored deep learning on multi-table data, which can be broadly categorized into two lines of research.
The first line focuses on transfer learning across independent tables without explicit relational structure. These methods aim to capture shared representations or inductive biases across datasets to improve generalization to new tasks. Classical transfer learning approaches~\cite{wang2022transtab,kim2024carte} learn cross-dataset commonalities to enhance performance on downstream targets. More recently, tabular foundation model approaches such as TabPFN~\cite{hollmann2022tabpfn} pretrain on large collections of tabular datasets, enabling in-context learning and rapid adaptation to unseen tasks.
The second line focuses on relational tabular data~\cite{zahradnik2023deep}, where multiple tables are connected via foreign-key relationships. 
A common paradigm is to represent relational schemas as graphs, where tabular neural networks encode individual tables and graph neural networks capture inter-table dependencies~\cite{chen2025relgnn,lachi2025boostingrelationaldeeplearning}.
Recent benchmarks and systems such as RelBench~\cite{robinson2024relbench} and rLLM~\cite{li2024rllm} further demonstrate the effectiveness of this paradigm in real-world relational database scenarios.
Our proposed method falls into this second category. Unlike prior work, our approach explicitly defines and jointly models both intra-table and inter-table dependencies within a unified framework, enabling a clear and task-specific paradigm for relational table learning.

\section{Conclusion}
\label{sec:conclusion}
In this paper, we propose a novel relational table learning method \modelname.
Compared to traditional approaches, our method introduces a clear, task-specific objective designed to model relational tables effectively. Specifically, we formalize intra-table and inter-table interactions as explicit modeling objectives, implemented via Transformer-based self-attention and cross-attention modules.
To improve scalability, we propose two theoretically grounded simplification mechanisms that substantially reduce computational overhead, making \modelname\ practical for large-scale datasets.
Extensive experiments demonstrate that \modelname\ consistently outperforms state-of-the-art baselines on diverse relational table tasks.

\noindent\subsubsection*{\textbf{Limitations and Future Work}}
Our evaluation mainly focuses on well-curated relational databases with explicit table relationships. 
Extending relational table learning to open and heterogeneous data lakes remains challenging due to weak and noisy inter-table relationships~\cite{pan2026lakemlb}. 
Developing robust and scalable methods for such scenarios is an important direction for future research.

\bibliographystyle{ACM-Reference-Format}
\balance

\appendix
\section{Proofs}\label{appendix:proofs}
\subsection{Proof of Theorem~\ref{theo:linear-attn}}
\label{proof:linear-attn}
\begin{proof}
We begin by applying the first-order Taylor expansion of the exponential function at \(x=0\). For any real number \(x\), we have:
\begin{equation}\label{eq:exp_taylor}
    \exp(x) = 1 + x + \Omega(x)
\end{equation}
\begin{equation}\label{eq:remains}
     \Omega(x)=\frac{\exp(\xi)}{2}x^2,\quad \xi\in[0,x]
\end{equation}
where $\Omega(x)$ denotes the second-order remainder term of the Taylor expansion, and $\xi$ is a point in the interval $[0,x]$, related to the actual approximation error.

As the queries and keys in Transformer have been assumed to be normalized (i.e., $\|\vec{q_i}\|_2 = \|\vec{k_i}\|_2 = 1$), their inner product satisfies \(|\vec{q_u}^\top \vec{k_v}|\le1\) and is typically small in practice. 
Applying the first-order Taylor approximation to the exponential of this inner product, we can rewrite Eq.~\ref{eq:exp_taylor} as:
\begin{equation}
\exp(\,\vec{q_u}^\top \vec{k_v})
\approx  1 + \vec{q_u}^\top \vec{k_v}
\end{equation}

Using this approximation, the standard Transformer's attention weight (in Eq.~\ref{eq:orig-attn}) becomes: 
\begin{equation}\label{eq:approx_attn}
a_{uv}
= \frac{\exp(\vec{q_u}^\top \vec{k_v})}{\displaystyle\sum_{i=1}^n \exp(\vec{q_u}^\top \vec{k_i})}
\approx
\frac{1 + \vec{q_u}^\top \vec{k_v}}
     {\displaystyle\sum_{i=1}^n \bigl(1 + \vec{q_u}^\top \vec{k_i}\bigr)}
\end{equation}
Since \(|\vec{q_u}^\top \vec{k_v}|\le1\), we have  \(1 + \vec{q_u}^\top \vec{k_v} \ge 0\), ensuring that the approximated attention weights remain non-negative.

Then, we can express the numerator of the approximate attention weight in Eq.~\ref{eq:approx_attn} in matrix form:
\begin{equation}\label{eq:numerator}
A = [a_{uv}]_{u,v=1}^n = J_n + QK^\top
\end{equation}

Meanwhile, the denominator of the approximate attention weight in Eq.~\ref{eq:approx_attn} corresponds to a diagonal matrix formed by row sums of $A$:
\begin{equation}\label{eq:denominator}
D = \mathrm{Diag}(A\vec{\mathbf{1}}_n) = \mathrm{Diag}(n + Q(K^\top \vec{\mathbf{1}}_n))
\end{equation}

Combining Eq.~\ref{eq:numerator} and Eq.~\ref{eq:denominator}, the linearized attention matrix is given by \(D^{-1} A\). 
Substituting this into the attention output (in Eq.~\ref{eq:inner-attn}) and omitting layer normalization for simplicity, we obtain:
\begin{equation}
D^{-1}\,A\,V
= D^{-1}\bigl[J_nV + Q(K^\top V)\bigr]
\end{equation}

This completes the proof of the validity of the linear attention approximation.
\subsubsection*{\textbf{Time complexity}}
Computing \(Q(K^\top V)\) requires \(O(n d^2)\) operations. Computing \(J_n V\) by summing the columns of \(V\) and broadcasting the result requires only \(O(n d)\). Thus the overall complexity remains \(O(n d^2)\) and is substantially lower than the \(O(n^2 d)\) cost of standard Transformer attention. In the regime \(n\gg d\) the dominant cost scales as \(O(n)\).
\end{proof}

\subsection{Proof of Corollary~\ref{cor:error-bound}}
\label{proof:error-bound}
\begin{proof}
To derive the error bound for the approximation between standard softmax-based attention and our linearized form, we fix a row index \(u\) and have
\begin{equation}
x_i = \vec{q_u}^\top \vec{k_i}, 
\quad |x_i|\le\epsilon<1,
\quad i=1,\dots,n.
\label{eq:proof2-xi}
\end{equation}

Let \(a_i\) and \(\tilde{a}_i\) denote the standard softmax-based attention weight and its linearized approximation. By Eq.~\ref{eq:approx_attn} in~\ref{proof:linear-attn}, they can be expressed as:
\begin{equation}\label{eq:attns_of_xi}
a_i = \frac{\exp(x_i)}{\sum_{i=1}^n \exp(x_i)} ,
\quad
\tilde{a}_i = \frac{1 + x_i}{\sum_{i=1}^n (1 + x_i)}
\end{equation}
Let $D_{\exp}$ and $D_{\lin}$ denote the denominators for the standard and linearized attention weights:
\begin{equation}
D_{\exp} = \sum_{i=1}^n \exp(x_i)
\label{eq:proof2-orig-denominator}
\end{equation}
\begin{equation}
D_{\lin} = \sum_{i=1}^n (1 + x_i) = n + \sum_{i=1}^n x_i
\label{eq:proof2-linear-denominator}
\end{equation}

Since \(|x_i| \le \epsilon\), applying first-order Taylor approximation of the exponential
function (i.e., Eq.~\ref{eq:exp_taylor} and Eq.~\ref{eq:remains} in~\ref{proof:linear-attn}) to Eq.~\ref{eq:proof2-orig-denominator}, we obtain:
\begin{equation}
D_{\exp}
= \sum_{i=1}^n (1 + x_i + \Omega(x_i))
= D_{\lin} + \sum_{i=1}^n \Omega(x_i)
\label{eq:proof2-relation-denominator}
\end{equation}
\begin{equation}
|\Omega(x_i)| \le \tfrac{1}{2} \exp(\epsilon) \epsilon^2, \ 
\Bigl| \sum_{i=1}^n \Omega(x_i) \Bigr| \le \tfrac{n}{2} \exp(\epsilon) \epsilon^2
\label{eq:proof2-remains}
\end{equation}

Then, \(a_i\) and \(\tilde{a}_i\) in Eq.~\ref{eq:attns_of_xi} can be rewritten as:
\begin{equation}
a_i = \frac{\exp(x_i)}{D_{\exp}} = \frac{1 + x_i + \Omega(x_i)}{D_{\exp}},
\quad
\tilde{a}_i = \frac{1 + x_i}{D_{\lin}}
\end{equation}

Their difference can be decomposed as:
\begin{equation}
a_i - \tilde{a}_i
= \underbrace{\frac{\Omega(x_i)}{D_{\exp}}}_{(I)}
  + (1 + x_i)\underbrace{\left( \frac{1}{D_{\exp}} - \frac{1}{D_{\lin}} \right)}_{(II)}
\end{equation}

For term \((I)\), from Eq.~\ref{eq:proof2-relation-denominator}, it follows that \(|\Omega(x_i)| \le \tfrac{1}{2} \exp(\epsilon) \epsilon^2\). Substituting this into Eq.~\ref{eq:proof2-remains} yields:
\(
D_{\exp} \ge D_{\lin} - \biggl|\sum_i \Omega(x_i)\biggr| \ge n - \tfrac{n}{2} \exp(\epsilon) \epsilon^2.
\)
Accordingly, the following bound of term \((I)\) holds:
\begin{equation}\label{eq:O_I}
|(I)| \le \frac{\tfrac{1}{2}\exp(\epsilon) \epsilon^2}{n - \tfrac{n}{2}\exp(\epsilon) \epsilon^2} = O(\epsilon^2)
\end{equation}

For term \((II)\), observe that
\begin{equation}\label{eq:II}
\frac{1}{D_{\exp}} - \frac{1}{D_{\lin}}
= \frac{D_{\lin} - D_{\exp}}{D_{\exp}D_{\lin}}
= -\frac{\sum_i \Omega(x_i)}{D_{\exp} D_{\lin}}
\end{equation}
From Eq.~\ref{eq:proof2-remains}, we have established that \(\bigl|\sum_i \Omega(x_i)\bigr| \le \tfrac{n}{2} \exp(\epsilon) \epsilon^2\). Furthermore, applying the bound \(|x_i| \le \epsilon\) from Eq.~\ref{eq:proof2-xi} to Eq.~\ref{eq:proof2-linear-denominator}, we obtain
\(
D_{\lin} = n + \sum_i x_i \ge n(1 - \epsilon).
\)
In addition, as shown in the analysis of \((I)\), we have
\(
D_{\exp} \ge n - \tfrac{n}{2} \exp(\epsilon) \epsilon^2.
\)
Substituting the bounds on $\sum_i \Omega(x_i)$, $D_{\lin}$ and ${D_{\exp}}$ into Eq.~\ref{eq:II} gives:
\begin{equation}\label{eq:O_II}
|(II)| \le \frac{\tfrac{n}{2}\exp(\epsilon) \epsilon^2}{n(1-\epsilon) \cdot (n -\tfrac{n}{2} \exp(\epsilon) \epsilon^2)} = O(\epsilon^2)
\end{equation}

Therefore, combining both Eq.~\ref{eq:O_I} and Eq.~\ref{eq:O_II}, we conclude:
\begin{equation}
\bigl| a_i - \tilde{a}_i \bigr| = O(\epsilon^2)
\end{equation}

This completes the proof.
\end{proof}

\subsection{Proof of Proposition~\ref{props:inter-attn-gnn}}
\begin{proof}
Consider the bipartite graph \(\mathcal{G} = (\mathcal{V}, \mathcal{U}, \mathcal{E})\), which contains exactly two node types, \(\psi(v_i)\) and \(\psi(u_i)\), and a single edge type \(r\). We only consider target nodes here.

Based on the edge set \(\mathcal{E}\), we define a masking matrix \(\mathrm{mask} \in \mathbb{R}^{n \times n}\) as:
\begin{equation}\label{eq:mask}
\mathrm{mask}_{ij} =
\begin{cases}
0, & \text{if } (u_i, v_j) \in \mathcal{E}, \\
-\infty, & \text{otherwise}.
\end{cases}
\end{equation}

For each target node \(v_i\), the aggregation operator \(\operatorname{AGG}_r(\cdot)\) defined in Eq.~\ref{eq:hgnn} collects features from all neighboring auxiliary nodes. The aggregated message at layer \(l\), denoted by \(msg_{v_i}^{(l)}\), is computed as:
\begin{equation}\label{eq:agg}
    \begin{aligned}
        msg_{v_i}^{(l)} &= \operatorname{AGG}_r\left( \left\{\vec{h}_{u_j}^{(l)} \mid u_j \in \mathcal{N}_{v_i}^r \right\} \right) \\
        &=\sum_{u_j \in \mathcal{N}_{v_i}^r} a_{v_i u_j} \vec{h}_{u_j}^{(l)}
    \end{aligned}
\end{equation}
where the weight is defined as
\[
a_{v_i u_j} = \frac{\exp\left(\vec{h}_{v_i}^{\top} \vec{h}_{u_j}\right)}
                   {\sum_{u_k \in \mathcal{N}_{v_i}^r} \exp\left(\vec{h}_{v_i}^{\top} \vec{h}_{u_k}\right)}
\]

We further define the update operator \(\operatorname{UPD}_{\psi(v_i)}\) in Eq.~\ref{eq:hgnn} as the identity mapping. The updated representation of the target node \(v_i\) at layer \(l{+}1\), denoted \(h_{v_i}^{(l+1)}\), is given by:
\begin{equation}\label{eq:upd}
    \vec{h}_{v_i}^{(l+1)} = \operatorname{UPD}_{\psi(v_i)}(\vec{h}_{v_i}^{(l)}, msg_{v_i}^{(l)}) = msg_{v_i}^{(l)}
\end{equation}

Therefore, after one round of message passing using Eq.~\ref{eq:agg} and Eq.~\ref{eq:upd}, the representation of node \(v_i\) is updated as:
\begin{equation}
    \vec{h}_{v_i}^{(l+1)} = \sum_{u_j \in \mathcal{N}_{v_i}^r} a_{v_i u_j} \vec{h}_{u_j}^{(l)}
    \label{eq:proof3-result}
\end{equation}

Let \(Q =\bigl[\vec{h}_{v_1}^{(l)},\dots,\vec{h}_{v_n}^{(l)}\bigr]\), \(K = V = \bigl[\vec{h}_{u_1}^{(l)},\dots,\vec{h}_{u_n}^{(l)}\bigr]\). Then, using the mask defined in Eq.~\ref{eq:mask}, Eq.~\ref{eq:proof3-result} can be reformulated in matrix form as:
\begin{equation}
    \mathrm{Softmax}\!\left(Q K^\top + \mathrm{mask}\right) V
\end{equation}
which is exactly equivalent to the Transformer cross-attention mechanism used in Eq.~\ref{eq:inter-attn}, omitting layer normalization and temperature scaling (i.e., division by \(\sqrt{d}\) ).
\end{proof}

\section{Baselines}
\label{appendix:baselines}
To conduct a rigorous and comprehensive empirical evaluation, we compare our method against a diverse set of competitive and widely recognized baselines. These baselines are organized into three categories: \emph{traditional decision tree models}, \emph{deep learning-based single table learning methods}, and \emph{relational table learning methods}.

\begin{itemize}

\item \textbf{LightGBM}~\cite{ke2017lightgbm} is a scalable tree-based gradient boosting framework optimized for efficient tabular data learning.

\item \textbf{TabPFN}~\cite{hollmann2022tabpfn} is a tabular foundation model based on transformers that performs Bayesian-style in-context prediction using the Prior-Data Fitted Network paradigm.

\item \textbf{FT-Transformer}~\cite{gorishniy2021revisiting} adapts the standard Transformer architecture for tabular data by projecting heterogeneous features into a shared embedding space for self-attention.

\item \textbf{SAINT}~\cite{somepalli2021saint} enhances tabular learning by applying joint attention across both rows and columns alongside contrastive self-supervised pre-training.

\item \textbf{Trompt}~\cite{pmlr-v202-chen23c} disentangles intrinsic column semantics from instance-conditioned feature importance to flexibly model tabular data.

\item \textbf{ExcelFormer}~\cite{chen2023excelformer} is a robust single-table model combining semi-permeable attention, data augmentation, and a gated attentive feed-forward module.

\item \textbf{RDL}~\cite{robinson2024relbench} enables end-to-end relational learning by encoding a database as a heterogeneous graph and integrating neural table encoders with GNNs.

\item \textbf{BRIDGE}~\cite{li2024rllm} simplifies relational table learning as a GCN-based baseline that focuses on a single relational table while disregarding inter-table heterogeneity.

\item \textbf{RelGNN}~\cite{chen2025relgnn} constructs atomic relational message routes based on primary-foreign key links using a hybrid message-passing and attention mechanism.

\item \textbf{LightRDL}~\cite{lachi2025boostingrelationaldeeplearning} extends RDL to capture temporal dynamics by constructing snapshotted relational graphs and applying heterogeneous GraphSAGE propagation.

\end{itemize}

\section{Implementation details}\label{appendix:exp_details}
In this section, we present experimental details, including data preparation, baselines details, model details, and hyper-parameter settings, to facilitate the reproducibility of our results.

\subsection{Data Preparation}
For the SJTUTables benchmark, we follow the original graph construction procedure proposed in its benchmark. Given the dataset's modest size, we employ full-batch training by feeding the entire graph into the model during each iteration. For RelBench, we adopt the data processing pipeline outlined in the original paper, where relational tables are transformed into heterogeneous temporal graphs. Owing to the large scale of the datasets, we apply a mini-batch training strategy based on temporal neighbor sampling. Specifically, we apply temporal neighbor sampling proposed in the original RelBench paper to generate subtables and subgraphs, compute loss, and update parameters on each sampled subgraph.

Each dataset is split into training, validation, and test sets, strictly following the splits defined in their original papers. We train models on the training set, evaluate performance on the validation set, and report final metrics on the test set.  We employ cross-entropy loss for classification tasks and L1 loss for regression tasks.
All models are trained on a single Nvidia RTX A6000 GPU with 48GB of memory.
All reported results are averaged over 20 independent runs.

\subsection{Baseline Details}
Our baselines include traditional decision tree models, deep learning-based single-table models, and relational table learning methods.
For decision tree and single-table models, which cannot natively handle multi-relational inputs, we adopt a Flattening strategy. This applies a sequence of LEFT JOIN operations on the target table to incorporate information from all auxiliary tables, resulting in a single, feature-rich table used for training and evaluation.

For relational table learning methods, we follow their original design constraints. In particular, BRIDGE supports only unary relationships between two tables; thus, its graph construction includes only one auxiliary table connected to the target table. Other relational baselines are allowed to use all available auxiliary tables.

To ensure fairness, we re-implemented all baselines based on the official code or their original papers and performed a comprehensive grid search over key hyper-parameters (see Appendix~\ref{subsect_hps} for details).
For methods with incomplete or non-runnable public implementations (e.g., RelGNN and LightRDL, whose training scripts were not fully provided), we re-implemented the missing training components to enable consistent evaluation.
We report the best performance for each baseline under our experimental setup.

\subsection{Our Model Details}
Our model comprises three key components: a column-aware table encoder, a linear attention module for intra-table interactions, and a HGNN module for inter-table interactions. We describe each component below. 

The column-aware table encoder follows the suggestion in RelBench, using specialized pre-encoding strategies for different column types. Categorical features are embedded using a lookup table that maps discrete values into a shared embedding matrix. Column offsets are added to distinguish between different columns, enabling the model to generate dense, column-specific representations. For textual features, we apply pretrained embeddings such as GloVe~\cite{pennington2014glove} or transformer-based language models like MiniLM~\cite{wang2021minilmv2multiheadselfattentionrelation,allMiniLML6v2_2021} to obtain vector representations. The number of ResNet layers used in the fusion module is configurable.

The linear attention module extends Eq.~\ref{eq:intra-final} into a multi-head variant during training. Each attention head projects inputs into a separate subspace, allowing the model to capture diverse interaction patterns in parallel. This design improves both the representational capacity and the robustness of intra-table modeling.

The HGNN module models inter-table relationships based on PK–FK constraints. These are represented as undirected edges in a heterogeneous graph. The aggregation function in Eq.~\ref{eq:inter-hgnn} is implemented as an element-wise summation over neighboring node features. The update function applies a linear transformation to the aggregated result, combined with the node's own representation.

\subsection{Comparison with Unified Text Embeddings}
\label{subsect_unified_embeddings}
To remove the potential confounding effect of text initialization, we reran SJTUTables comparisons using unified text embeddings for the main methods InRTL, RDL, and RelGNN.

Specifically, as shown in Table~\ref{tab:unified_embedding}, we report results under two shared embedding settings: (1) GloVe, and (2) lightweight BERT-family encoder all-MiniLM-L6-v2.
As shown in Table~\ref{tab:unified_embedding}, under both unified settings, InRTL consistently remains the best on all three SJTUTables datasets, indicating that the performance gain is not solely due to text embedding choice.

\begin{table}[!t]
\centering
\caption{Unified text embeddings comparison on SJTUTables (Accuracy \%, $\uparrow$).}
\label{tab:unified_embedding}
\begin{tabular}{l l c c c}
\toprule
Embedding & Method & TML1M & TLF2K & TACM12K \\
\midrule
GloVe & InRTL & \textbf{39.83} & \textbf{49.67} & \textbf{46.80} \\
GloVe & RDL & 35.60 & 28.65 & 19.50 \\
GloVe & RelGNN & 28.43 & 21.34 & 20.20 \\
\midrule
all-MiniLM-L6-v2 & InRTL & \textbf{40.30} & \textbf{50.33} & \textbf{48.17} \\
all-MiniLM-L6-v2 & RDL & 36.40 & 31.67 & 19.80 \\
all-MiniLM-L6-v2 & RelGNN & 28.60 & 24.70 & 20.43 \\
\bottomrule
\end{tabular}
\end{table}

\subsection{Hyper-parameters Settings}
\label{subsect_hps}
We tune all model hyper-parameters based on validation performance.
Unless otherwise stated, the hyper-parameters are selected from the following candidate values using exhaustive grid search.

\begin{itemize}
    \item batch size: \{128, 256, 512\}
    \item learning rate: \{1e-3, 5e-3, 1e-2\}
    \item hidden dimension: \{32, 64, 128, 256, 512\}
    \item weight decay: \{1e-5, 1e-4, 1e-3, 1e-2\}
\end{itemize}

The search spaces for other model-specific hyper-parameters are listed below:

\begin{itemize}
    \item LightGBM: max depth $\in$ \{3, 4, 5, 6, 7, 8, 9, 10\}, number of leaves $\in$ \{2, 32, 64, 128, 256, 512\}.
    \item FT-Transformer: number of heads $\in$ \{4, 8, 16\}, number of layers $\in$ \{2, 3, 4, 6\}.
    \item TabNet: number of steps $\in$ \{3, 4, 5, 6\}, $\gamma$ $\in$ \{1.0, 1.5, 2.0\}.
    \item Trompt: number of Trompt Cells $\in$ \{1, 3, 6, 12\}.
    \item ExcelFormer: number of heads $\in$ \{4, 8, 16, 32\}.
    \item BRIDGE: layers of TabTransformer $\in$ \{2, 3, 4\}, layers of GCN $\in$: \{1, 2, 3, 4\}.
    \item RDL: layers of heterogeneous GraphSAGE $\in$: \{1, 2, 3, 4\}.
    \item \modelname: number of layers in linear attention $\in$ \{1, 2, 3, 4, 5\}, number of heads in linear attention $\in$ \{2, 4, 8, 16\}, number of layers in HGNN $\in$ \{1, 2, 3, 4, 5\}, initial value of learnable coefficient $\beta$ $\in$ \{0.5, 0.8\}.
\end{itemize}

\section{Pseudocode of \modelname}
\label{appendix:Pseudocode}
For clarity, we provide the pseudocode of the overall InRTL training procedure in Algorithm~\ref{alg:inrtl}.

\begin{algorithm}[H]
\caption{\modelname}
\label{alg:inrtl}
\begin{algorithmic}[1]
\Require Relational tables $\mathcal{T}$, PK-FK graph $\mathcal{G}$, number of layers $L$, ground-truth labels $y$, sampling function $\mathrm{Sampler()}$
\Ensure Trained \modelname model
\State Initialize the parameters of \modelname
\For{each training epoch}
  \ForAll{mini-batches $(\mathcal{T}_\mathcal{B}, \mathcal{G}_{\mathcal{B}}) \sim \mathrm{Sampler}(\mathcal{T}, \mathcal{G})$}
    \State Obtain $E_{\text{tgt}}$ and $E_{\text{aux}}$ for $\mathcal{T}_\mathcal{B}$ as described in Sec.~\ref{sec:methodology}
    \State $H^{(0)}_{\text{intra}} \gets E_{\text{tgt}}, \quad H^{(0)}_{\text{inter}} \gets E_{\text{tgt}}$
    \For{$l = 0$ to $L-1$}
      \State $Q \gets H^{(l)}_{\text{intra}}W_Q,\; K \gets H^{(l)}_{\text{intra}}W_K,\; V \gets H^{(l)}_{\text{intra}}W_V$
      \State Update $H^{(l+1)}_{\text{intra}}$ according to Eq.~\ref{eq:intra-final}
      \State Update $H^{(l+1)}_{\text{inter}}$ with $\mathcal{G}_{\mathcal{B}}$ according to Eq.~\ref{eq:inter-hgnn}
    \EndFor
    \State Compute prediction $\hat{y}_{\mathcal{B}}$ using Eq.~\ref{eq:fusion}
    \State Obtain mini-batch labels $y_{\mathcal{B}}$ from $y$
    \State Compute loss on $\hat{y}_{\mathcal{B}}$ and $y_{\mathcal{B}}$, and update parameters
  \EndFor
\EndFor
\State \Return Trained \modelname model
\end{algorithmic}
\end{algorithm}

\end{document}